\newcommand*\circled[1]{\tikz[baseline=(char.base)]{
            \node[shape=circle,draw,inner sep=2pt] (char) {{\footnotesize#1}};}}
\newtheorem{defn}{Definition}
\newtheorem{example}{Example}
\newtheorem{thm}{Theorem}
\crefname{section}{\S}{\S\S}
\Crefname{section}{\S}{\S\S}
\crefname{table}{Table}{}
\crefname{figure}{Figure}{}
\crefname{algorithm}{Algorithm}{}
\crefname{equation}{eq.}{eqs.}
\crefname{appendix}{App.}{}
\crefname{prop}{Prop.}{}
\newcommand{\citeposs}[1]{\citeauthor{#1}'s (\citeyear{#1})}
\newcommand{\hleftenc}{\overrightarrow{\boldsymbol h}^{(\mathrm{enc})}}
\newcommand{\hleftdec}{\overrightarrow{\boldsymbol h}^{(\mathrm{dec})}}
\newcommand{\hrightenc}{\overleftarrow{{\boldsymbol h}}^{(\mathrm{enc})}}
\newcommand{\henc}{{\boldsymbol h}^{(\mathrm{enc})}}
\newcommand{\xx}{\mathbf{x}}
\newcommand{\yy}{\mathbf{y}}
\newcommand{\Aa}{\mathbf{a}}
\newcommand{\calA}{\mathcal{A}}
\newcommand{\func}{\xi} 
\newcommand{\jumpin}{\texttt{jump}}
\newcommand{\walkin}{\texttt{walk}}
\newcommand{\runin}{\texttt{run}}
\newcommand{\lookin}{\texttt{look}}
\newcommand{\twicein}{\texttt{twice}}
\newcommand{\leftin}{\texttt{left}}
\newcommand{\rightin}{\texttt{right}}
\newcommand{\aroundin}{\texttt{around}}
\newcommand{\andin}{\texttt{and}}
\newcommand{\thricein}{\texttt{thrice}}
\newcommand{\afterin}{\texttt{after}}
\newcommand{\jumpout}{\textsc{jump}}
\newcommand{\walkout}{\textsc{walk}}
\newcommand{\runout}{\textsc{run}}
\newcommand{\lookout}{\textsc{look}}
\newcommand{\leftout}{\textsc{lturn}}
\newcommand{\rightout}{\textsc{rturn}}
\newcommand{\SCAN}{{\small \textsf{SCAN}}}
\newcommand{\gdecode}{G\operatorname{-Dec}}
\newcommand{\gconv}{G\operatorname{-Conv}}
\newcommand{\gembed}{G\operatorname{-Embed}}
\newcommand{\dom}{\mathrm{dom}}
\newcommand{\vpsi}{\boldsymbol{\psi}}
\newcommand{\vphi}{\boldsymbol{\phi}}
\newcommand{\vf}{\boldsymbol{f}}
\newcommand{\R}{\mathbb{R}}
\newcommand{\bigo}[1]{\mathcal{O}\left(#1\right)}
\newcommand{\oldversion}[1]{}
\DeclareMathOperator*{\argmax}{argmax}
\title{Equivariant Transduction through Invariant Alignment}
\newcommand{\ucambridge}{\normalfont \text{\textipa{D}}}
\newcommand{\ethz}{\text{\normalfont \textipa{Q}}}
\author{Jennifer White$^{\ucambridge}$~\;~~\;~Ryan Cotterell$^{\ethz}$ \\
  $^{\ucambridge}$University of Cambridge~\;~$^{\ethz}$ETH Z\"{u}rich \\
  \texttt{\href{mailto:jw2088@cam.ac.uk}{jw2088@cam.ac.uk}}~\;~\texttt{\href{mailto:ryan.cotterell@inf.ethz.ch}{ryan.cotterell@inf.ethz.ch}}
}
\begin{document}
\maketitle
\begin{abstract}
The ability to generalize compositionally is key to understanding the potentially infinite number of sentences that can be constructed in a human language from only a finite number of words.
Investigating whether NLP models possess this ability has been a topic of interest: \SCAN\ \cite{lake2018generalization} is one task specifically proposed to test for this property.
Previous work has achieved impressive empirical results using a group-equivariant neural network that naturally encodes a useful inductive bias for \SCAN\ \citep{gordon2019permutation}.
Inspired by this, we introduce a novel group-equivariant architecture that incorporates a group-invariant hard alignment mechanism.
We find that our network's structure allows it to develop stronger equivariance properties than existing group-equivariant approaches.
We additionally find that it outperforms previous group-equivariant networks empirically on the \SCAN\ task.
Our results suggest that integrating group-equivariance into a variety of neural architectures is a potentially fruitful avenue of research, and demonstrate the value of careful analysis of the theoretical properties of such architectures.
\newline
\newline
\vspace{1.5em}
\hspace{.5em}\includegraphics[width=1.25em,height=1.25em]{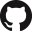}\hspace{.75em}\parbox{\dimexpr\linewidth-10\fboxsep-10\fboxrule}{\url{https://github.com/rycolab/equivariant-transduction}}
\vspace{-.5em}
\end{abstract}

\section{Introduction}
Humans painlessly process sentences they have never heard before.
This feat is possible because they can construct the meaning of a sentence by composing the meaning of its parts.
This phenomenon is known as \textbf{compositional generalization} in the computational linguistics literature \cite{lake2018generalization, hupkes2020compositionality} and it is what enables the understanding of an infinite number of novel sentences from only a finite set of words \citep{chomsky2009syntactic, montague1970universal}.
For example, without knowing what action the verb \emph{to blick} describes, we know that \emph{blick twice} indicates that this action should be performed two times.
It is natural that we would like for neural network models of language to be endowed with this ability as well---and, indeed, if they are to achieve human-like performance, it is likely to be necessary.

There have been multiple proposals for methods of assessing these abilities in neural models \cite{bahdanau2019closure, hupkes2020compositionality}.
One popular benchmark is the \SCAN\ task \citep{lake2018generalization} and its derivatives \cite{ruis2020benchmark}. 
\SCAN\ involves translating natural language instructions into a sequence of actions executable by an agent navigating in an environment.
Examples of these commands are shown in \cref{fig:scan}. 
In order to test generalization capabilities, there are various test--train splits which deliberately hold out specific words.
For example, the training set for the Add Jump split only contains \jumpin\ in the simple input--output pair (\jumpin, \jumpout), while the test set includes commands where it is combined with modifiers, such as \jumpin\ \twicein\ and \jumpin\ \leftin.
Since the model has seen the effect of these modifiers on other verbs in training, a model with the ability to perform compositional generalization should be able to apply them to \jumpin.\looseness=-1
\begin{figure}
    \centering
    \small
    \begin{tabular}{lcl}
             \walkin\ \rightin & $\rightarrow$ &\rightout\ \walkout\\
                 \walkin\ \andin\ \jumpin\ \twicein & $\rightarrow$ & \walkout\ \jumpout\ \jumpout\\
    \jumpin\ \leftin\ \afterin\ \walkin  & $\rightarrow$ & \walkout\  \leftout\ \jumpout\\
    \end{tabular}
    \caption{Examples of \SCAN\ inputs and outputs}
    \label{fig:scan}
\end{figure}

\SCAN\ is an example of a task that could benefit from a group-equivariant network.
When a network possesses the property of equivariance to a group $G$, acting on the input with an element of $G$ results in the output differing by the action of the same group element.
With an appropriate choice of group, this property can be used to imbue a network with the ability to generalize compositionally.
If we have a set of input words, and a corresponding set of their output words, and we act on these with a permutation group which swaps words within these sets, then swapping one word for another in the input will result in the corresponding output words being swapped in the output.
This effectively means that when the network has learned an example in training it can generalize to any input which can be reached from the training example by acting on it with a group element.
In the case of the \SCAN\ task, the action on a sentence amounts to replacing a word with another vocabulary word from the same lexical class (e.g., replacing a verb with another verb).
For example, if a $G$-equivariant network is trained on the \SCAN\ task using a group $G$ whose action amounts to swapping \SCAN\ verbs, then the probability of \runin\ being mapped to \runout\ will be identical to the probability of \jumpin\ being mapped to \jumpout\ -- thus even if only one verb is seen in training, it learns to apply observed patterns to the unseen verbs.
This approach was first applied to \SCAN\ by \newcite{gordon2019permutation}.

We present a novel group-equivariant architecture for the \SCAN\ task which has a notable theoretical advantage over similar existing approaches: the effective orbit of the model is larger, which results in more robust generalization to novel examples.
From one input command, our model can potentially generalize to an exponential number of unseen examples, where previous group-equivariant models could generalize only to a constant number.
We also demonstrate the empirical effect of this advantage, showing that our model outperforms that of \citeauthor{gordon2019permutation} across all splits of the \SCAN\ task.

Concretely, we incorporate group equivariance into the hard alignment string-to-string transduction model described by \citet{wu-etal-2018-hard}.
Hard alignment differs from the more common soft attention \citep{bahdanau2014neural} in that each output symbol is aligned with precisely one input symbol, rather than calculating a weighting over all input symbols.
Our model combines a group-invariant hard alignment mechanism with a group-equivariant transduction mechanism, which enables its improved generalization capabilities.
Our findings motivate further exploration of group-equivariant architectures, and suggest that careful consideration of their provable equivariance properties is worthwhile.

\section{Related Work} 
Many attempts have been made to quantify what it means for a model to exhibit compositional behaviour and
 to create models that can successfully generalize compositionally.
\citet{hupkes2020compositionality} provide a summary article on this topic, in which they describe different aspects of compositional generalization and formulate methods for assessing a model on each aspect.
In this work, we concern ourselves primarily with what \citeauthor{hupkes2020compositionality} term \textbf{systematicity}---the ability to understand an unseen combination of previously seen parts.

The \SCAN\ task which we focus on has been approached in many ways.
\citet{liuscan} achieve state-of-the-art results on the task, achieving 100.0 across all splits, using a memory-augmented model, trained using reinforcement learning, which learns how to identify the symbolic functions described by specific phrases within inputs.
\citet{lake2019compositional} approach the task using meta sequence-to-sequence learning.
Although this approach produces excellent results, it requires a bespoke meta-training approach for each generalization task.
\citet{russin2019compositional} achieve good results by treating syntax and semantics separately.
Their model separately calculates likely alignments between input and output words, and calculates how likely a word is to be produced conditioned on the input word with which it is aligned.
This approach is similar in concept to the hard alignment model that we use as the base for our $G$-equivariant architecture.

Equivariant networks built using group convolutions \citep{kondor2018generalization} have been used most in the field of computer vision, where they have been used to create Group Convolutional Neural Networks ($G$-CNNs) imbued with the property of invariance to transformations of an image such as rotation and reflection \citep{cohen2016group}.
Recently these techniques have been adopted for NLP tasks: \citet{gordon2019permutation} applied group-equivariant networks to the task of compositional generalization.
They used the building block of the group convolution to construct group-equivariant network components, including an LSTM, an attention mechanism and an embedding layer.
Their model is a group-equivariant analogue of a standard LSTM-based sequence-to-sequence model.
They evaluate it on the \SCAN\ task and achieve high accuracies on the splits assessing systematicity.
We build on their work by presenting an alternative group-equivariant architecture for this task, resulting in more robust generalization abilities and improved empirical performance.

\section{Group-Equivariant Networks}

\subsection{Group Theory}
Some understanding of the basics of Group Theory are helpful to understand group-equivariant networks.
This section will briefly outline the key concepts necessary for understanding our work.
\begin{defn}
A \textbf{group} $(G, \circ)$ is a set $G$ with a binary operation $\circ$ with the following properties:
\begin{enumerate}[label=(\roman*)]
    \item \textbf{Closure:} For any $g, h \in G$, $g\circ h \in G$.
    \item \textbf{Identity:} There exists an element $e$, which we call the identity, such that for all $g \in G$, $e\circ g = g \circ e = g$.
    \item \textbf{Inverse:} For any $g \in G$ there is an element $h \in G$ such that $g\circ h = h \circ g = e$. We call this element the inverse of $g$ and we may write $h = g^{-1}$.
    \item \textbf{Associativity:} for any $g_1, g_2, g_3 \in G$ we have $(g_1 \circ g_2)\circ g_3 = g_1 \circ (g_2 \circ g_3)$.
\end{enumerate}
\end{defn}
\begin{defn}
Let $G$ be a group and $X$ a set.
A \textbf{left group action} of $G$ on $X$ is a function $T: G \times X \rightarrow X$. 
As shorthand, we write $T(g,x) = g \circ x$. 
\end{defn}
\begin{defn}
If a group $G$ acts on a set $X$ through a defined action, then the \textbf{orbit} of an element $x \in X$ under $G$ (which we write $G \circ x$) is $G \circ x = \{ g\circ x \mid g \in G\}$.
\end{defn}
\begin{defn}
The \textbf{symmetric group} $S_n$ is the group of all permutations of a set of $n$ elements. 
The operation for this group is composition.
\end{defn}
If $G=S_n$ acts on a set $X=\{x_1,\dots,x_n\}$, then the canonical action of a group element permutes the items in the set. 
We will write elements of $S_n$ in cycle notation, where $g = (a_1 a_2 \dots a_m)$ indicates that $g\circ a_i = a_{i+1}$ for $i \in \{1, \dots, m-1\}$ with $g \circ a_m = a_1$ (and by extension $g \circ x_{a_i} = x_{g\circ a_i}$).
Indices that do not appear in the cycle are left unchanged.
$()$ indicates the identity permutation, which maps each item to itself.
An understanding of permutation groups will be key to our exposition, so we give an example of $S_3$. 
\begin{example}
$S_3$ is the symmetric group of permutations on 3 elements.
In full, $S_3 = \{(), (12), (13), (23), (123), (132)\}$.
It can act on any set with 3 elements.
For example, if $X=\{a, b, c\}$ then $(12) \circ a = b$, $(12)\circ b = a$ and $(12)\circ c = c$, since $(12)$ maps the first element of the set to the second element and vice versa, while leaving the third element unchanged. 
\end{example}
\begin{defn}
A group $G$ is \textbf{cyclic} if there is an element $g \in G$ such that all elements of $G$ take the form $g^i$, where $g^i = \underbrace{g\circ g \circ \cdots \circ g}_{i} $ for some integer $i$.
We say that $G$ is generated by $g$, and write $G = \langle g \rangle$.
\end{defn}
\begin{example}
One example of a cyclic group is a subgroup of $S_n$ generated by the permutation $g = (123\dots p)$ for $p \leq n$.
This permutation $g$ has the effect of shifting each of these $p$ elements by $1$, so $g^i$ has the effect of shifting each element by $i$, and $g^p$ is equal to the identity permutation.
We call this group a \textbf{cyclic shift group}.

To revisit the example of $S_3$, $G = \langle (123) \rangle = \{(), (123), (132)\}$ is an example of a cyclic shift group.
If this group acts on a set of 3 elements, then the identity element $()$, leaves the items in the set unchanged, while $(123)$ ``shifts'' each item along by one and $(132)$ by two.
\end{example}
Cyclic shift groups will be useful in our work.
If we have a set $X$ of $p$ objects that we wish to permute, the symmetric group of all possible permutations $S_p$ contains $p!$ elements.
However, in our work we will consider each object in isolation, so we do not to consider every permutation: we only need a group $G$ such that for a given $x_1 \in X$, for any element $x_2 \in X$ there is a permutation $g \in G$ such $g\circ x_1 = x_2$.
This is true of the cyclic shift group $G=\langle(123\dots p)\rangle$, which contains only $p$ elements.
By using cyclic shift groups in place of the symmetric groups, we are able to avoid unnecessarily slow calculations.
\begin{defn}\label{defn:equi}
Let $X$ and $Y$ be sets and $G$ be a group. 
Suppose the elements of $G$ act on $X$ and $Y$ with actions denoted $g \circ x$ and $g \circ y$, $x \in X$ and $y \in Y$.
A function $\alpha: X \rightarrow Y$ is \textbf{equivariant} with respect to $G$ (or $G$-equivariant) if and only if $\alpha(g \circ x) = g \circ \alpha(x)$ for all $x \in X$ and $g \in G$.
We say that $\alpha$ is \textbf{invariant} to $G$ if $\alpha(g \circ x) = \alpha(x)$ for any $x \in X$ and $g \in G$.
\end{defn}

\subsection{\SCAN: A Test of Compositionality}
The \SCAN\ task was proposed by \citet{lake2018generalization} to test a model's ability to generalize to unseen examples by composing known elements.
The task involves translating between an instruction in a limited form of English with input alphabet $\Sigma = \{\walkin, \jumpin, \ldots\}$ and an executable command with words taken from output alphabet $\Delta = \{\walkout, \jumpout, \ldots\}$.
As input the model receives a string $\xx \in \Sigma^*$, such as \jumpin\ \twicein, or \walkin\ \afterin\ \runin, and as output it should produce a string $\yy \in \Delta^*$, like \jumpout\ \jumpout, or \runout\ \walkout.

\SCAN\ contains several splits, each designed to test a different kind of generalization.
We focus on the splits that aim to evaluate the systematicity of the model -- that is to say how well it can understand the unseen combination of previously seen parts.
This is done by withholding certain combinations from the training set and then evaluating the model's performance on these unseen combinations in the test set.
For example, the training set of the Add Jump split only contains \jumpin\ in the basic input--output pair (\jumpin, \jumpout), while the test set contains inputs where it is used in combination with other modifiers.
Since these modifiers have been seen before with other verbs, this tests how well the model generalizes from what it has been exposed to in training.

\subsection{Group-Equivariance in Transduction}
We term a set of words whose function in \SCAN\ is the same and that are used in identical contexts as a \textbf{lexical class}.
We can see that, for example, any \SCAN\ command containing \walkin\ would equally be a valid instruction if \walkin\ were replaced with \jumpin, \lookin, or \runin.
We call this class the Verb lexical class.
We divide the input vocabulary into lexical classes $L^{\text{in}}_1, \dots, L^{\text{in}}_I \subseteq\Sigma$.
If we designate a lexical class to be our equivariant lexical class $L^{\text{in}}_{\text{equi}}$, we take the group $G=\langle (12\dots|L^{\text{in}}_{\text{equi}}|)\rangle$ to be the cyclic shift group of size $|L^{\text{in}}_{\text{equi}}|$ -- i.e., the group generated by a permutation that shifts each element along by one.
For example, if $L^{\text{in}}_{\text{equi}}=\{\walkin,\lookin,\runin,\jumpin\}$, then $G = \langle (1234)\rangle$, whose elements are the permutations $\{(), (1234), (13)(24), (1432)\}$.
The group acts on $L^{\text{in}}_{\text{equi}}$ by permuting its elements, so $(1234) \circ \walkin = \lookin$, for example.
The group also acts in the same way on the output lexical class $L^{\text{out}}_{\text{equi}} = \{\walkout,\lookout,\runout,\jumpout\}$, with $(1234) \circ \walkout = \lookout$.
Previous work \cite{gordon2019permutation} has defined the action of $g\in G$ on a sentence $\xx = (x_1, \dots, x_N)\in\Sigma^*$ as $g \circ \xx = (g\circ x_1, \dots, g\circ x_N)$.\footnote{If $x_n\in \Sigma\setminus L^{\text{in}}_{\text{equi}}$, $g\circ x_n=x_n$.}

We now explain why group equivariance is a useful property for a task such as \SCAN, and how it can lead to models that can generalize compositionally.
Consider the \SCAN\ task with input vocabulary $\Sigma$, output vocabulary $\Delta$ and lexical classes $L^{\text{in}}_{\text{equi}}\subseteq\Sigma$, $L^{\text{out}}_{\text{equi}}\subseteq\Delta$.
Let $\func: \Sigma^* \rightarrow \Delta^*$ be a transducer that constitutes a $G$-equivariant function for $G=\langle (12\dots|L^{\text{in}}_{\text{equi}}|)\rangle$, with its action defined on $\Sigma^*$ and $\Delta^*$ as above.
If we have an input--output pair $(\xx_0, \yy_0)\in\Sigma^*\times \Delta^*$ such that $\func(\xx_0) = \yy_0$, then for all input--output pairs $(\xx, \yy)\in\Sigma^*\times \Delta^*$ such that $(\xx,\yy) = (g\circ\xx_0,g\circ\yy_0)$ for some $g\in G$, we have $\func(\xx) = \func(g\circ\xx_0)=g\circ\func(\xx_0)=g\circ\yy_0=\yy$.
Plainly, this means that if our function successfully transduces a pair
$(\xx_0, \yy_0)$, it will also transduce all pairs its orbit $G\circ (\xx_0,\yy_0)=\{(g\circ\xx_0,g\circ\yy_0) \mid \forall g \in G\}$.
If we consider the case where $\func$ is obtained from the output of a neural network, this would mean that it could generalize compositionally to these examples even if the network had not been trained on them.

\subsection{Constructing $G$-Equivariant Networks}\label{subsec:layers}
We now describe how to build a neural network that is equivariant to a group $G$ using $G$-equivariant building blocks.
We describe three such building blocks that we will use in the construction of our equivariant hard-alignment model.
\paragraph{\circled{1} $G$-Convolution.}

The $G$-convolution is the extension of the standard convolution to an arbitrary finite group $G$ \citep{kondor2018generalization}.
Let $G$ be a finite group and $\vf: \dom(\vf) \rightarrow \R^K$ an input function.
Suppose we have $D$ learnable filter functions where we denote the $d^{\text{th}}$ filter function as $\vpsi^{(d)}:  \dom(\vf) \rightarrow \R^K$.
Then the $G$-convolution of $\vf$ with $\{\vpsi^{(d)}\}_{d=1}^D$ is a $|G| \times D$ matrix
where each entry is given by the following:
\begin{align}\label{eq:gconv}
    \gconv(&\vf; \vpsi)_{g, d} \\
    &= \sum_{h \in \dom(\vf)} \vf(h)\cdot \vpsi^{(d)}(g^{-1}\circ h) \nonumber
\end{align}
As shown in \citet[\textsection ~6.1]{cohen2016group}, $G$-convolutions are $G$-equivariant.
\paragraph{\circled{2} $G$-Embed.}
It is often desirable to embed input in a vector space -- but for a $G$-equivariant network we require this step too to be equivariant.
\newcite{gordon2019permutation} show that a $G$-equivariant embedding can be obtained as a special case of a $G$-convolution where the input function for input word $x$ is a one-hot encoding, %
$x:\{1, \dots, |\Sigma|\}\rightarrow\{0,1\}$.
Thus the learnable filter functions will be one-dimensional, and there is no longer a need for the sum, as it is zero at all but one value.
We call our set of $K$ filter functions $\{\omega^{(k)}\}_{k=1}^K$, with $\omega^{(k)} : \{1, \dots, |\Sigma|\} \rightarrow \R$.
Then the $G$-equivariant embedding of an input $x$ is a $|G| \times K$ matrix $e(x)$ with entries
\begin{align}
    e(x)_{g,k} &= \gembed(x;\boldsymbol{\omega})_{g,k}\nonumber\\
    &= \omega^{(k)}(g^{-1} \circ x)
\end{align}
\paragraph{\circled{3} $G$-Decode.}
The output $\vphi: G \rightarrow \R^D$ %
of some composition of equivariant layers based on $G$-convolutions will be a function over group elements, so in order to calculate logits over an output distribution of items $\Delta$, \citet{gordon2019permutation} use a decoding layer
\begin{equation}
    \gdecode(\vphi, \widetilde{y}; \boldsymbol{\rho}) = \sum_{h\in G} \vphi(h)\cdot\boldsymbol{\rho}(h^{-1}\circ\widetilde{y})
\end{equation}
where $\widetilde{y}\in\Delta$ is a candidate output value and $\boldsymbol{\rho}: \{1, \dots, |\Delta|\} \rightarrow \R^D$ %
is a learnable filter function.
This layer is not a form of group convolution, but instead is equivariant due to parameter-sharing \cite{ravanbakhsh17a}.

\section{An Equivariant Transducer}
We now describe the architecture of our equivariant hard alignment transducer.
Let $\Sigma$ be an input alphabet and $\Delta$ be an output alphabet.
Given an input string $\xx \in \Sigma^*$ we create a model to calculate a probability distribution $p(\yy \mid \xx)$ over $\yy \in \Delta^*$.
We consider what it means for such a model to be equivariant with the following theorem.
\begin{thm}
Let $p(\yy\mid\xx)$ be a probability distribution over $\Delta^*$.
If $p(g\circ\yy\mid g\circ\xx)=p(\yy\mid\xx)$ for any $g\in G, \xx\in\Sigma^*, \yy\in\Delta^*$, then the transducer $\func:\Sigma^*\rightarrow\Delta^*$ defined by $\func(\xx)=\argmax_{\yy\in\Delta^*}p(\yy\mid\xx)$ is equivariant.
\end{thm}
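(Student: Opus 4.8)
The plan is to unfold the definition of equivariance from \cref{defn:equi}: I must show that $\func(g\circ\xx)=g\circ\func(\xx)$ for every $g\in G$ and $\xx\in\Sigma^*$. Writing $\yy^\star=\func(\xx)=\argmax_{\yy\in\Delta^*}p(\yy\mid\xx)$, it suffices to prove that $g\circ\yy^\star$ is a maximizer of $p(\,\cdot\mid g\circ\xx)$, i.e.\ that $\argmax_{\yy\in\Delta^*}p(\yy\mid g\circ\xx)=g\circ\yy^\star$. The whole argument then reduces to comparing the two optimization problems defining $\func(\xx)$ and $\func(g\circ\xx)$.

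The key observation is that the action of any fixed $g\in G$ on $\Delta^*$ is a bijection, whose inverse is the action of $g^{-1}$; this is immediate from the inverse and associativity axioms of a group. Hence, as $\yy$ ranges over all of $\Delta^*$, so does $g^{-1}\circ\yy$, and I may reindex the search. Concretely, I would substitute $\yy=g\circ\zz$ into the objective for the transformed input, which reparametrizes the feasible set bijectively, and then apply the invariance hypothesis $p(g\circ\zz\mid g\circ\xx)=p(\zz\mid\xx)$ to rewrite the objective entirely in terms of the untransformed distribution. Maximizing $p(\yy\mid g\circ\xx)$ over $\yy$ is then equivalent to maximizing $p(\zz\mid\xx)$ over $\zz$, whose maximizer is by definition $\yy^\star$; undoing the substitution shows the maximizing $\yy$ is exactly $g\circ\yy^\star$, giving $\func(g\circ\xx)=g\circ\yy^\star=g\circ\func(\xx)$ as required.

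The main obstacle I anticipate is not the algebra but the well-definedness of the $\argmax$: if $p(\,\cdot\mid\xx)$ admits several maximizers, then $\func$ is a priori set-valued rather than a genuine function $\Sigma^*\to\Delta^*$. The reindexing argument in fact establishes the stronger set-level identity $\argmax_{\yy}p(\yy\mid g\circ\xx)=g\circ\bigl(\argmax_{\yy}p(\yy\mid\xx)\bigr)$, so equivariance of the optimal \emph{set} holds regardless of ties. To recover equivariance of a single-valued $\func$, I would either assume the maximizer is unique for each $\xx$ or insist that ties be resolved by a $G$-equivariant selection rule, and I would state this assumption explicitly so that the tie-breaking commutes with the group action.
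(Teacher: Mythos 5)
Your proof is correct and takes essentially the same route as the paper's: both arguments reindex the $\argmax$ through the bijection of $\Delta^*$ given by the group action (the paper substitutes $\yy \mapsto g^{-1}\circ\yy$ where you substitute $\yy = g\circ\zz$, which is the same change of variables read in the opposite direction) and then invoke the invariance hypothesis to reduce the transformed problem to the original one. Your closing remark about ties in the $\argmax$ addresses a well-definedness issue that the paper's proof silently assumes away, and resolving it via uniqueness or a $G$-equivariant tie-breaking rule is exactly the right fix.
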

\begin{proof}
Given an $\xx$, let $\yy^*=\func(\xx)=\argmax_{\yy\in\Delta^*}p(\yy\mid\xx)$.
Now $\func(g\circ\xx) = \argmax_{\yy\in\Delta^*}p(\yy\mid g\circ\xx)$.
But for any $\yy\in\Delta^*$, $p(\yy\mid g\circ \xx)=p(g^{-1}\circ\yy\mid g^{-1}\circ g\circ \xx)=p(g^{-1}\circ\yy\mid\xx)$, so $\func(g\circ\xx) = \argmax_{\yy\in\Delta^*}p(g^{-1}\circ\yy\mid \xx)$.
We know that the probability distribution conditioned on $\xx$ is maximized by $\yy^*$, so we have $g^{-1}\circ \func(g\circ\xx) = \yy^*$ and thus $\func(g\circ\xx) = g\circ \yy^*$ and the transducer is equivariant.
\end{proof}
Moving forward we will use this equivalent definition to discuss the equivariance of our model.

\subsection{A Hard-Alignment Transducer}
As stated above, we aim to construct a distribution $p(\yy \mid \xx)$ where $\yy \in \Delta^*$ and $\xx \in \Sigma^*$.
The key idea behind a hard-alignment
model is that the two strings $\xx$ and $\yy$ are aligned according to a latent alignment $\Aa \in \calA(\xx, \yy)$. 
Hard alignment is contrasted with a soft alignment \cite{bahdanau2014neural, luong-etal-2015-effective}, which does not have the interpretation as a latent variable.
For $\xx$ of length $N$, $\yy$ of length $M$, each $\Aa$ is a vector in $\{1, \ldots, N\}^{M}$. 
If we have $a_m = n$, then the input word $x_n$ and output word $y_m$ are aligned. 
Thus, $\calA(\xx, \yy)$ consists of all non-monotonic alignments such that each word in $\yy$ aligns to exactly one word in $\xx$.
Because there is no annotation for alignments between the strings, we marginalize over alignments to compute the distribution:
\begin{align}\label{eq:likelihood}
        &p(\yy \mid \xx) = \sum_{\Aa \in \calA(\xx, \yy)}\,p(\yy, \Aa \mid \xx) \\
        &= \sum_{\Aa \in \calA(\xx, \yy)} \prod_{m=1}^{M} p(y_m \mid a_m, \yy_{<m}, \xx)\,p(a_m \mid \yy_{<m}, \xx) \nonumber
\end{align}
\citet{wu-etal-2018-hard} show that
\cref{eq:likelihood} may be rewritten as follows using the distributive property
\begin{equation}
    \prod_{m=1}^{M} \sum_{a_m=1}^{N} \underbrace{p(y_m \mid a_m, \yy_{<m}, \xx)}_{\mathit{translator}}\,\underbrace{p(a_m \mid \yy_{<m}, \xx)}_{\mathit{aligner}}\label{eq:trans_align} 
\end{equation}
which is more efficient to compute.
Specifically, this allows the distribution $p(\yy \mid \xx)$  to be computed $\bigo{M\cdot N}$ rather than $\bigo{|\calA(\xx, \yy)|}$, which is exponential in both $M$ and $N$. 

This formulation allows us to completely separate the alignment probability $p(a_m \mid \yy_{<m}, \xx)$, which calculates how likely an output word is to align with each input word, from the word translation probability $p(y_m \mid a_m, \yy_{<m}, \xx)$, which calculates how likely each output word is to be produced when aligned with a given input word. 
This can also be viewed as separately treating the syntax (alignment probability) and semantics (word translation probability) of the input.
In this sense, the model is similar to that of \citet{russin2019compositional}.

We also experimented with a variation on this formulation in which the sum over $a_m$ is replaced with taking the maximum, as well as an annealed variation.
This is described in \cref{sec:variant}.\looseness=-1

\subsection{An Equivariant Translator}
We now explain how the translator term in \cref{eq:trans_align} is defined in order to ensure group-equivariance. 
Because $y_m$ and $x_{a_m}$ are both individual words---as opposed to full sentences---the distribution $p(y_m \mid x_{a_m}, a_m)$ is a simple classifier.
We use a composition of the $G$-equivariant layers described in \cref{subsec:layers}, so that\looseness=-1
\begin{align*}\label{eq:dist}
    p(y_m \mid x_{a_m}, a_m) &= \frac{\exp\left(\gdecode(\vphi, y_m) \right)}{\sum_{y' \in \Delta} \exp\left(\gdecode(\vphi, y') \right)}
\end{align*}
where $\vphi = \gconv(e(x_{a_m}))$.
So, for each input word $x_{a_m}$, we first obtain a $G$-Embedding, then pass this through a $G$-Convolution, then use $G$-Decode to obtain logits over possible output words, which are then fed through a softmax to obtain probabilities.\footnote{Additional non-linearities are introduced by passing the output of each layer through a $\tanh$ function.} %

\subsection{An Invariant Aligner}\label{sec:align}
The alignment term is parameterized using a recurrent neural network.
Before encoding, we replace each word with a symbol indicating its lexical class.
Formally, if we have $I$ disjoint %
lexical classes $L^{\text{in}}_{1}, \ldots,L^{\text{in}}_{I}\subseteq\Sigma$ that cover $\Sigma$ and $J$ disjoint lexical classes $L^{\text{out}}_{1}, \ldots,L^{\text{out}}_{J}\subseteq\Delta$ that cover $\Delta$, we define functions $\ell_\Sigma: \Sigma \rightarrow \{1, \ldots, I\}$ and $\ell_\Delta: \Delta \rightarrow \{1, \ldots, J\}$ such that $\ell_\Sigma(w) = i$ iff $w \in L^{\text{in}}_{i}$, $\ell_\Delta(w) = j$ iff $w \in L^{\text{out}}_{j}$.
We overload these so that for $\xx \in \Sigma^*$ of length $N$, $\ell_\Sigma(\xx)=(\ell_\Sigma(x_1), \dots, \ell_\Sigma(x_{N}))$ and analogously $\ell_\Delta(\yy) = (\ell_\Delta(y_1), \dots, \ell_\Delta(y_{M}))$ for $\yy \in \Delta^*$ of length $M$.
Then, our model is actually dependent on $\ell_\Sigma(\xx)$ and $\ell_\Delta(\yy_{<m})$.
This has the effect of delexicalizing the input and equivalence-classing the words.
For example, if the verb and direction lexical classes are being considered, the input \walkin\ \leftin\ \afterin\ \runin\ will, in effect, be represented as \texttt{<verb>} \texttt{<direction>} \afterin\ \texttt{<verb>}. 
This substitution imbues the alignment model with some useful theoretical properties that will be discussed in \cref{subsec:equi_prop}.\looseness=-1

The delexicalized input sequence $\ell_\Sigma(\xx)$ of length $N$ is encoded both forwards and backwards using an LSTM to produce hidden states $\hleftenc_{n}, \hrightenc_{n} \in \mathbb{R}^{d_h}$ for $n\in\{1,\dots,N\}$.
These are then concatenated to obtain $\henc_{n} = \hleftenc_{n} \bigoplus  \hrightenc_{n} \in \mathbb{R}^{2d_h}$.
The output sequence $\ell_\Delta(\yy)$ of length $M$ is encoded forwards to produce $\hleftdec_m \in \mathbb{R}^{d_h}$ for $m\in\{1,\dots,M\}$.
Then, we have the following distribution
\begin{equation}\label{eq:align}
    p(a_m \mid \ell_\Delta(\yy_{<m}), \ell_\Sigma(\xx)) = \frac{\exp(e_{ma_m})}{\sum_{n=1}^{N} \exp(e_{mn})}
\end{equation}
where
\begin{equation}
   e_{mn} =  {\hleftdec_m}^{\top} \boldsymbol{T} \henc_{n}
\end{equation}
and $\boldsymbol{T} \in \mathbb{R}^{d_h \times 2d_h}$ is a learned matrix.
\subsection{Theoretical Results and Discussion}\label{subsec:equi_prop}
We know that each word-to-word translator $p(y_m \mid x_{a_m}, a_m)$ is $G$-equivariant.
We now explore in more detail the theoretical properties of the entire model.
We begin by considering the properties of the alignment model.
\begin{defn}
We say that a function $f:X\times Y\rightarrow\mathbb{R}$ is invariant to a group $G$ if for any $x\in X,y\in Y, g\in G$, we have $f(g\circ x, g\circ y) = f(x,y)$.
\end{defn}
\begin{thm}
The alignment model $p(a_m \mid \ell_\Delta(\yy_{<m}), \ell_\Sigma(\xx))$ defined in \cref{sec:align} is invariant to $G$.\looseness=-1
\end{thm}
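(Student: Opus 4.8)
The plan is to show that the delexicalization step renders the input to the alignment model invariant under the group action, after which invariance of the probability is immediate. The whole argument rests on the fact that $G=\langle(12\dots|L^{\text{in}}_{\text{equi}}|)\rangle$ only ever permutes words \emph{within} the equivariant lexical class, so the class-labelling functions $\ell_\Sigma$ and $\ell_\Delta$ are blind to the action.

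First I would prove the key word-level fact that $\ell_\Sigma(g\circ x) = \ell_\Sigma(x)$ for every $x\in\Sigma$ and $g\in G$. There are two cases. If $x\in L^{\text{in}}_{\text{equi}}$, then $g\circ x$ is again an element of $L^{\text{in}}_{\text{equi}}$, since the cyclic shift group permutes the members of this class among themselves; hence $x$ and $g\circ x$ belong to the same lexical class and receive the same label. If instead $x\in\Sigma\setminus L^{\text{in}}_{\text{equi}}$, then by the definition of the action $g\circ x = x$, so the equality is trivial. The identical argument, applied to $L^{\text{out}}_{\text{equi}}$, gives $\ell_\Delta(g\circ y)=\ell_\Delta(y)$ for every $y\in\Delta$.

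Next I would lift this to strings. Because the action on a string is componentwise, $g\circ\xx=(g\circ x_1,\dots,g\circ x_N)$, and so $\ell_\Sigma(g\circ\xx)=(\ell_\Sigma(g\circ x_1),\dots,\ell_\Sigma(g\circ x_N))=(\ell_\Sigma(x_1),\dots,\ell_\Sigma(x_N))=\ell_\Sigma(\xx)$, with the analogous identity $\ell_\Delta(g\circ\yy_{<m})=\ell_\Delta(\yy_{<m})$. Finally, I would observe that the alignment model of \cref{eq:align} depends on its inputs \emph{only} through the delexicalized sequences $\ell_\Sigma(\xx)$ and $\ell_\Delta(\yy_{<m})$: the encoder states $\henc_n$ and decoder states $\hleftdec_m$, the scores $e_{mn}$, and the resulting softmax are all computed from these labels alone. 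Substituting $g\circ\xx$ and $g\circ\yy_{<m}$ therefore yields exactly the same hidden states and scores, so $p(a_m\mid\ell_\Delta(g\circ\yy_{<m}),\ell_\Sigma(g\circ\xx))=p(a_m\mid\ell_\Delta(\yy_{<m}),\ell_\Sigma(\xx))$, which is precisely the claimed invariance.

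The main obstacle is conceptual rather than technical: the crux is recognizing in the first step that the group action is, by construction, confined to permutations within a single lexical class, so that delexicalization collapses the entire orbit of an input to one representation. Once that is in hand, invariance follows with no further computation, because the aligner never observes the lexical identity of a word, only its class.
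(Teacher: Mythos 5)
Your proposal is correct and follows essentially the same route as the paper's proof: establish that delexicalization is unaffected by the group action (i.e., $\ell_\Sigma(g\circ\xx)=\ell_\Sigma(\xx)$ and $\ell_\Delta(g\circ\yy_{<m})=\ell_\Delta(\yy_{<m})$ because $g$ only permutes words within their lexical class), then conclude invariance since the aligner depends on its inputs only through these class labels. Your explicit two-case analysis at the word level and the remark that the encoder/decoder states are computed from labels alone merely spell out details the paper leaves implicit; there is no substantive difference.
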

\begin{proof}
Let us take a group element $g \in G$.
Then $\ell_\Sigma(g\circ \xx) = \ell_\Sigma (\xx)$.
We can see this because $g\circ \xx = (g\circ x_1, \dots, g\circ x_N)$ and $g$ only permutes each word \emph{within} its lexical class.
Words within the same lexical class are mapped to the same value by $\ell_\Sigma$.
A similar argument shows that $\ell_\Delta(g\circ \yy_{<m})=\ell_\Delta(\yy_{<m})$.
So $p(a_m \mid \ell_\Delta(g \circ\yy_{<m}), \ell_\Sigma(g\circ\xx)) = p(a_m \mid \ell_\Delta(\yy_{<m}), \ell_\Sigma(\xx))$ and the model is invariant.\looseness=-1
\end{proof}
We can now examine the properties of the transducer as a whole.
\begin{thm}
The model $p(\yy \mid \xx)$ defined in \cref{eq:trans_align} is $G$-equivariant.
\end{thm}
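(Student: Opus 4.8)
The plan is to reduce the desired $G$-equivariance to the invariance condition $p(g\circ\yy\mid g\circ\xx)=p(\yy\mid\xx)$: by the first theorem of this section, this condition immediately implies that $\func(\xx)=\argmax_{\yy\in\Delta^*}p(\yy\mid\xx)$ is equivariant. So it suffices to prove that the distribution in \cref{eq:trans_align} satisfies this invariance, and all remaining work amounts to pushing the group action through its factored form.

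First I would fix an arbitrary $g\in G$ and expand $p(g\circ\yy\mid g\circ\xx)$ using the translator--aligner factorization. The key structural observation is that the action $g\circ\xx=(g\circ x_1,\dots,g\circ x_N)$ is componentwise and length-preserving, so $g\circ\xx$ still has length $N$, the inner sum over $a_m$ still ranges over $\{1,\dots,N\}$, and $(g\circ\xx)_{a_m}=g\circ x_{a_m}$ while $(g\circ\yy)_m=g\circ y_m$. This yields
\[
p(g\circ\yy\mid g\circ\xx)=\prod_{m=1}^{M}\sum_{a_m=1}^{N} p(g\circ y_m\mid g\circ x_{a_m}, a_m)\,p\bigl(a_m\mid \ell_\Delta(g\circ\yy_{<m}), \ell_\Sigma(g\circ\xx)\bigr).
\]
The goal is then to show each of the two factors inside is individually unchanged. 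For the aligner I would invoke the invariance theorem just proved in \cref{subsec:equi_prop}, which gives $p(a_m\mid \ell_\Delta(g\circ\yy_{<m}), \ell_\Sigma(g\circ\xx))=p(a_m\mid \ell_\Delta(\yy_{<m}), \ell_\Sigma(\xx))$ directly. For the translator I would use that it is the softmax of a composition of the $G$-equivariant layers $\gembed$, $\gconv$, and $\gdecode$: equivariance of the logits means the normalized probability satisfies $p(g\circ y_m\mid g\circ x_{a_m}, a_m)=p(y_m\mid x_{a_m}, a_m)$, i.e.\ simultaneously acting on the conditioning input word and the predicted output word leaves the probability fixed. Substituting both equalities collapses every summand back to its untransformed value, so the inner sum, and hence the product over $m$, equals $p(\yy\mid\xx)$, which completes the reduction.

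The main obstacle I anticipate is not the algebra but pinning down precisely what ``the translator is $G$-equivariant'' means at the level of probabilities and verifying that it yields invariance of the probability value. Concretely, one must argue that the $\gdecode$ logit for output candidate $\widetilde y$ under input $g\circ x_{a_m}$ equals the logit for $g^{-1}\circ\widetilde y$ under $x_{a_m}$; the softmax normalizer is a sum over the entire output vocabulary $\Delta$, which $g$ permutes but does not change, so the normalized probability of $g\circ y_m$ matches that of $y_m$. Once this single invariance of the translator is established, the rest is the mechanical term-by-term substitution sketched above, and the two invariances compose cleanly because the alignment index $a_m$ and the string length $N$ are preserved by the action.
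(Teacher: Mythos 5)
Your proposal is correct and follows essentially the same route as the paper's proof: expand $p(g\circ\yy\mid g\circ\xx)$ through the translator--aligner factorization, apply the aligner-invariance theorem to one factor and the translator's $G$-equivariance to the other, and collapse back to $p(\yy\mid\xx)$. The only difference is that you spell out details the paper leaves implicit (length preservation of the action, and the softmax-normalizer argument for why equivariant logits give $p(g\circ y_m\mid g\circ x_{a_m})=p(y_m\mid x_{a_m})$), which is a faithful elaboration rather than a different approach.
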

\begin{proof}
We take a group element $g\in G$.
Then
\begin{align}
    p&(g\circ\yy\mid g\circ \xx) \nonumber\\
    &= \prod_{m=1}^{M} \sum_{a_m=1}^{N} p(g\circ y_m \mid a_m, g\circ x_{a_m}) \nonumber\\
    &\quad\quad\quad p(a_m \mid \ell_\Delta(g\circ\yy_{<m}), \ell_\Sigma(g\circ \xx)) \nonumber
\end{align}
We have seen that the translator part of the model is equivariant to $G$, and the alignment part is invariant, so we can rewrite the above as 
\begin{equation}
    \prod_{m=1}^{M} \sum_{a_m=1}^{N} p(y_m \mid a_m, x_{a_m}) p(a_m \mid \ell_\Delta(\yy_{<m}), \ell_\Sigma(\xx))\nonumber
\end{equation}
which is equal to $p(\yy\mid \xx)$ and thus the model is $G$-equivariant.
\end{proof}
We would like to understand what these equivariance properties actually mean for our model's ability to generalize to unseen sentences.
To aid in the discussion of this question, we define the following.
\begin{defn}
Given a model
$p(\yy\mid\xx)$ that is equivariant to a group $G$ and a sentence pair $(\xx, \yy)\in\Sigma^*\times\Delta^*$, we define the \textbf{theoretical orbit} of $(\xx_0,\yy_0)$ under the model as $\Omega_T((\xx_0,\yy_0)) = \{(g\circ \xx_0, g\circ \yy_0)\mid g\in G\}$.
For any $(x_i,y_i)\in \Omega_T((\xx_0,\yy_0))$, $p(\yy_i\mid\xx_i) = p(\yy_0\mid\xx_0)$.
\end{defn}
If we maximize $p(\yy_0\mid\xx_0)$ for a training pair $(\xx_0,\yy_0)$, any benefits of this are shared by elements of its theoretical orbit, allowing the model to generalize to these pairs.
So the size of the theoretical orbit of a sentence pair is a way of quantifying how widely a model can generalize from that pair.

In the case of our model, with the equivariant lexical class corresponding to the 4 \SCAN\ verbs, any sentence pair containing one of these verbs has a theoretical orbit of size 4.
If $(\xx_0, \yy_0)$ is a sentence pair containing two different \SCAN\ verbs -- for example, (\walkin\ \afterin\ \runin, \runout\ \walkout) -- this means that its theoretical orbit contains 4 sentences out of the 16 sentences of this form that can be constructed.
In addition to the theoretical orbit, which is guaranteed by the properties of the model, we also consider the behaviour of a trained model in practice.
\begin{defn}
Given a model $p(\yy\mid\xx)$ that is equivariant to a group $G$ and a sentence pair $(\xx_0, \yy_0)\in\Sigma^*\times\Delta^*$, we define the \textbf{observed orbit} of $(\xx_0,\yy_0)$ under the model as $\Omega_O((\xx_0,\yy_0)) = \{(\xx, \yy)\in\Sigma^*\times\Delta^*\mid -\log p(\yy\mid\xx) =-\log  p(\yy_0\mid\xx_0)\}$.
Clearly $\Omega_T((\xx_0,\yy_0))\subseteq\Omega_O((\xx_0,\yy_0))$.
\end{defn}
\begin{figure*}
    \centering
    \BeginAccSupp{Alt={A chart showing bubbles representing orbits of sentence pairs over 100 epochs of training. It shows that in the early epochs, there are 4 distinct orbits of size 4, but by epoch 50 there is 1 orbit of size 16.},method=escape}
    \includegraphics[scale=0.35]{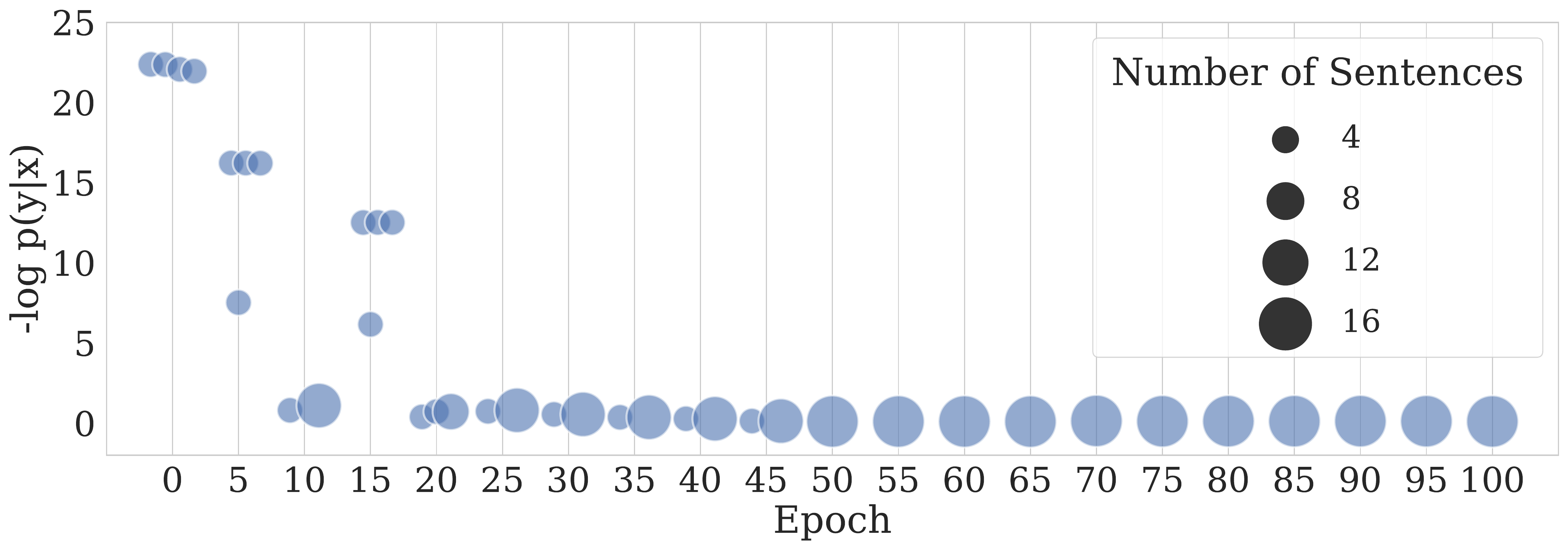}
    \EndAccSupp{}
    \caption{Observed orbits of sentence pairs $(\xx, \yy)$ during the first 100 epochs of training, with $\xx$ of the form \texttt{<verb$_1$> right thrice after <verb$_2$>} and $\yy$ of the form \textsc{<VERB$_2$> RTURN <VERB$_1$> RTURN <VERB$_1$> RTURN <VERB$_1$>}.}
    \label{fig:bubbleorbits}
\end{figure*}
To better understand how our model behaves, we examined the observed orbit of one of our best-performing models throughout training.
We selected sentence pairs from \SCAN\ containing 2 verbs, and generated all possible sentence pairs of the same form.
For each form, there are 16 such pairs.
While training our model, every five epochs we recorded the negative log-likelihood of each sentence pair as given by the model and counted how many of the 16 pairs had the same negative log-likelihood to find the size of the observed orbits.\footnote{The values were considered to be equal if they were within floating point error, as checked by \texttt{torch.isclose()}.}
We show the results for one sentence pair for the first 100 epochs of training in \cref{fig:bubbleorbits}. %
We can see that, initially, the 16 sentence pairs with the same form are in 4 distinct orbits of size 4, as predicted theoretically.
However, as the model trains, all 16 sentences come to have the same negative log-likelihood, and this continues for the remaining duration of training.

To better understand why the observed orbit under our model is larger than predicted theoretically, we examine the effect of alignments in more depth.
We consider a gold alignment for a \SCAN\ sentence pair $(\xx,\yy)$ to be an alignment $\Aa^*$ such that words from corresponding lexical classes are aligned and each output word is aligned with an input word with the corresponding meaning (e.g., \leftout\ aligned with \leftin, \walkout\ aligned with \walkin).

Given $\xx\in \Sigma^N$, $\yy\in\Delta^M$ and an alignment $\Aa$ between them, consider a group $G^N$ with elements $g=(g_1, \dots, g_N)$ with $g_n\in G$. %
Define its action on $\xx$ and $\yy$ by $g\circ \xx = (g_1\circ x_1, \dots, g_N\circ x_N)$ and %
$g\circ_{\Aa}\yy=(g_{a_1}\circ y_1, \dots, g_{a_M}\circ y_M)$.
Note that the group $G^N$ depends on the length of the string, and its action on $\yy\in\Delta^M$ depends on the alignment.
\begin{thm}\label{thm:gn-equi}
Given $\xx\in\Sigma^N$, $\yy\in\Delta^M$ and a gold alignment $\Aa^*\in\calA(\xx,\yy)$ between the two, $p(g\circ_{\Aa^*}\yy\mid g\circ\xx,\Aa^*)=p(\yy\mid \xx,\Aa^*)$ for all $g\in G^N$, with actions defined as above.
\end{thm}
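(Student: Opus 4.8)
The plan is to reduce $p(\yy\mid\xx,\Aa^*)$ to a product of per-position translators and then apply the $G$-equivariance of the translator one coordinate at a time.

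First I would note that conditioning on a single fixed alignment $\Aa^*$, with coordinates $a^*_1,\dots,a^*_M$, removes both the aligner factor and the outer marginalizing sum of \cref{eq:trans_align}, so that
\[
p(\yy\mid\xx,\Aa^*)=\prod_{m=1}^{M}p(y_m\mid x_{a^*_m},a^*_m).
\]
Next I would expand the two group actions coordinate-wise. By definition of the $G^N$-action on $\Sigma^N$ we have $(g\circ\xx)_{a^*_m}=g_{a^*_m}\circ x_{a^*_m}$, and by the alignment-dependent action on $\Delta^M$ we have $(g\circ_{\Aa^*}\yy)_m=g_{a^*_m}\circ y_m$. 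Substituting these into the factorized form gives
\[
p(g\circ_{\Aa^*}\yy\mid g\circ\xx,\Aa^*)=\prod_{m=1}^{M}p\!\left(g_{a^*_m}\circ y_m\mid g_{a^*_m}\circ x_{a^*_m},\,a^*_m\right).
\]

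Finally I would invoke the $G$-equivariance of the word-to-word translator established in \cref{subsec:equi_prop}, namely that $p(h\circ y_m\mid h\circ x_{a^*_m},a^*_m)=p(y_m\mid x_{a^*_m},a^*_m)$ for every $h\in G$. Applying this factor by factor with $h=g_{a^*_m}\in G$ collapses each term back to $p(y_m\mid x_{a^*_m},a^*_m)$, so the product equals $p(\yy\mid\xx,\Aa^*)$, which is the claim.

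The only place any care is required is the index bookkeeping in the middle step. The action on $\yy$ is routed through the alignment precisely so that each output word $y_m$ is transformed by the \emph{same} group element $g_{a^*_m}$ that transforms its aligned input word $x_{a^*_m}$; this coordinate-wise matching is exactly what permits applying the single-word translator invariance independently at each position, even though distinct input positions may be acted on by distinct group elements $g_n$. I would also remark that the argument uses only that $\Aa^*$ is a fixed valid element of $\calA(\xx,\yy)$ and never its goldness: the goldness is relevant only for interpreting this result in terms of the enlarged observed orbit, not for establishing the equality itself.
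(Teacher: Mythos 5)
Your proof is correct and follows essentially the same route as the paper's own (much terser) argument: factorize the alignment-conditioned likelihood into per-position translator terms, expand both group actions coordinate-wise so that $y_m$ and $x_{a^*_m}$ are hit by the same element $g_{a^*_m}$, and collapse each factor using the single-word translator's $G$-equivariance. Your closing remark that the argument never uses goldness---only that $\Aa^*$ is a fixed element of $\calA(\xx,\yy)$---is also accurate; the paper's proof likewise makes no use of it, goldness mattering only for the subsequent discussion of observed orbits.
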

\begin{proof}
For any $g\in G^N$, $p(g\circ_{\Aa^*}\yy \mid g\circ\xx, \Aa^*) =\prod_{m=1}^{M} p(g_{a^*_m}\circ y_m \mid g_{a^*_m}\circ x_{a^*_m})=\prod_{m=1}^{M} p(y_m \mid x_{a^*_m})= p(\yy\mid\xx, \Aa^*)$.
\end{proof}
This means that for $\xx_0\in\Sigma^N$, $\yy_0\in\Delta^M$ and a gold alignment $\Aa^*$, the model is equivariant to $G^N$ and thus the theoretical orbit of a sentence pair $(\xx_0, \yy_0)$ under the model is $|G|^N$ rather than $|G|$ as it was without conditioning on the alignment.\footnote{Unlike previous examples, \cref{thm:gn-equi} only \textit{directly} implies the $G^N$-equivariance of the restricted transducer $\func(\xx\mid\Aa^*)=\argmax_{\yy\in\Delta^M}p(\yy\mid\xx,\Aa^*)$, since the action of $G^N$ is only defined for $\yy \in \Delta^M$. %
However, it does in fact imply the equivariance of the unrestricted transducer, since conditioning on the alignment means that
$p(\yy\mid\xx,\Aa^*) = 0$ for $\yy\in\Delta^Q$ if $Q\not=M$.} %
Although this may seem irrelevant since we do not condition on gold alignments, we find that our trained models do approximate this.
As training progresses, our model gradually becomes more confident about the correct alignment between the input and output.
Once the model has confidently learned the gold alignment $\Aa^*$ between an input--output pair $\xx\in\Sigma^*, \yy\in\Delta^*$, $p(a_m\mid \yy_{<m},\xx)$ becomes very close to $0$ for $a_m \not= a^*_m$ and very close to $1$ for $a_m=a^*_m$, meaning that $p(\yy\mid\xx)$ approaches $p(\yy\mid\xx,\Aa^*)=\prod^{M}_{m=1} p(y_m\mid x_{a^*_m})$.
This means that for an input pair $(\xx_0,\yy_0)$, once the model has a high degree of confidence about the gold alignment $\Aa^*$ and $p(\yy_0\mid\xx_0)$ approaches $p(\yy_0\mid\xx_0,\Aa^*)$, the observed orbit of $(\xx_0,\yy_0)$ will become this larger theoretical orbit.
This explains what is shown in \cref{fig:bubbleorbits}.
\section{Experiments and Results}
\begin{table*}[t]
    \centering
    \begin{tabular}{lllll}
    \toprule
        Model & Simple & Add Jump & Around Right & Length \\ \midrule
        \citet{russin2019compositional} & 100.0 & 91.0  & 28.9 & 15.2 \\
        \citet{liuscan} & 100.0 & 100.0 & 100.0 & 100.0\\
        \citet{gordon2019permutation} & 100.0 & 99.1 & 92.0 & 15.9 \\
        Equivariant Hard Alignment & 100.0 & 100.0 & 100.0 & 28.5\\ \bottomrule 
    \end{tabular}
    \caption{Accuracy achieved on \SCAN\ task, presented alongside results from state-of-the-art systems}
    \label{tab:results}
\end{table*}
Experiments were performed on the Simple, Add Jump, Around Right and Length splits of \SCAN.
In each case, models were trained on 90\% of the train set, by minimizing the negative log-likelihood of observed sentence pairs, with 10\% of the train set reserved to be used as a validation set.
Models were selected to have the lowest loss on the validation set, and then evaluated on the test set.

For the Add Jump split, the group used was the cyclic shift group of size 4 acting on the set of \SCAN\ verbs.
For the Around Right split, which withholds the combination \aroundin\ \rightin, the group used was the cyclic shift group of size 2 acting on the set of directions.
Hyperparameters were selected through a random hyperparameter search.
At test time, outputs were decoded using a beam search with 3 beams.\looseness=-1

Results are shown in \cref{tab:results}. 
We can see that our model outperforms \citeposs{gordon2019permutation} similar group-equivariant model on all splits, achieving 100\% accuracy on all but the Length split, demonstrating empirically the advantage of the wider generalization capabilities that we explained theoretically.
It still falls short of \citeposs{liuscan} state-of-the-art memory-augmented model on the Length split.
Since the equivariance of our model primarily targets systematicity, which is not the ability targeted by the Length split, it is not surprising that it does not perform at state-of-the-art levels in this split.
We note, however, that our model does substantially outperform both \citeposs{gordon2019permutation} group-equivariant model, and \citeposs{russin2019compositional} semantic-syntactic alignment model on this split, which are the two models most similar to our own.

\section{Limitations and Future Work}
Group-equivariant networks have not yet been widely adopted in NLP, and we feel that there is potential for more tasks to be identified that could benefit from the application of group equivariance.
A further line of investigation would be to develop and assess group-equivariant versions of a wider variety of architectures.
The model used by \citet{gordon2019permutation} is an equivariant version of a standard LSTM-based sequence-to-sequence model, and the model used in our work is based on the hard alignment model used by \citet{wu-etal-2018-hard} -- these are only two of many possible architectures.
Variants of other architectures may possess different equivariance properties from both of these models, and may perform well on different tasks.
Future work could incorporate group equivariance into more architectures and assess their theoretical properties and empirical performance.

A key limitation of group-equivariant networks is the need to understand and specify the group to which it is equivariant.
This makes it difficult to apply these networks to real world tasks rather than artificial datasets such an \SCAN.
Future work could investigate ways to identify from data which words should be in a lexical class, or ways to allow group-equivariant networks to deal with open vocabulary problems.
Some work has been done on learning input and output vocabulary alignments in the context of \SCAN\ \citep{akyurek-andreas-2021-lexicon}, so it is possible that this could be used to improve group-equivariant architectures by reducing or eliminating their reliance on lexical classes and groups known \textit{a priori}.\looseness=-1

\section{Conclusion}
In this work, we proposed and implemented a string-to-string transduction model which combines a group-invariant sum over hard alignments and a group-equivariant output word probability to create a model which is equivariant to the swapping of words in the same lexical class.
We applied this model to the \SCAN\ task, finding that it is successful in allowing the model to generalize compositionally. 
We show theoretically that our model's structure allows for wider generalization to novel sentences than existing group-equivariant approaches and demonstrate this empirically.
We suggest that this is strong motivation to explore group-equivariant variants of other architectures, and to investigate other tasks which may benefit from group-equivariant models, as well as suggesting that theoretical analysis of equivariance properties may be a useful tool in understanding performance differences.

\bibliography{anthology,custom}
\bibliographystyle{acl_natbib}

\newpage
\appendix
\onecolumn
\section{Model Variants}\label{sec:variant}
In addition to the sum-based model presented in the main body of the paper, we also experimented on some variations of the model, which we explain here.
\subsection{Max Model}
The max version of the model is given by
\begin{equation}
    p(\yy \mid \xx) = \prod_{m=1}^{M} \max_{1\leq a_m\leq N} p(y_m \mid a_m, \yy_{<m}, \xx)\,p(a_m \mid \yy_{<m}, \xx) \label{eq:max}
\end{equation}
where all terms are the same as for the sum model.
\subsection{Annealed Max Model}
This model proved difficult to train, so we also conducted experiments with an annealed max model, which is given by
\begin{equation}
    p(\yy \mid \xx) = \prod_{m=1}^{M} \sum_{a_m=1}^{N} \alpha^m_{a_m} p(y_m \mid a_m, \yy_{<m}, \xx)p(a_m \mid \yy_{<m}, \xx) \label{eq:anneal}
\end{equation}
where $\boldsymbol{\alpha^m}$ is given by
\begin{equation}
    \alpha^m_{a_m}=\frac{\exp\left(\frac{1}{\tau}p(y_m \mid a_m, \yy_{<m}, \xx)p(a_m \mid \yy_{<m}, \xx)\right)}{\sum_{a_m^\prime=1}^{N} \left(\frac{1}{\tau}p(y_m \mid a_m^\prime, \yy_{<m}, \xx)p(a_m^\prime \mid \yy_{<m}, \xx)\right)}
\end{equation}
where $0<\tau\leq 1$ is the temperature.
As $\tau$ approaches $0$, $\boldsymbol{\alpha^m}$ approaches a one-hot vector indicating the $\argmax$ of $p(y_m \mid a_m, \yy_{<m}, \xx)p(a_m \mid \yy_{<m}, \xx)$, and thus \cref{eq:anneal} becomes close to \cref{eq:max}.
During training, $\tau$ is gradually decreased, so that the fully trained model is, in effect, a max model.
The starting value of $\tau$, as well as the schedule on which it is decreased, are hyperparameters.
\subsection{Model Comparison}
In \cref{tab:variant_results} we show the best accuracy achieved on each split by each variant of our model.

\begin{table*}[h]
    \centering
    \begin{tabular}{lllll}
    \toprule
        Model & Simple & Add Jump & Around Right & Length \\ \midrule
        Sum & 100.0 & 100.0 & 100.0 & 28.5\\ 
        Max & 100.0 & 99.9 & 99.9 & 18.3\\ 
        Annealed Max & 99.9 & 99.9 & 99.9 & 19.7\\ \bottomrule 
    \end{tabular}
    \caption{Accuracy achieved on \SCAN\ task by each variant of our model}
    \label{tab:variant_results}
\end{table*}

\section{Reproducibility}
The model was optimized using Adam \citep{kingma2014adam} with a learning rate of 0.001.
We chose hyperparameters through a random search.
Each hyperparameter included in the search in described in \cref{tab:hyperparam_range} along with the range that was searched.
\begin{table}
    \centering
    \begin{tabular}{lll}
    \toprule
    Hyperparameter & Details & Range\\\midrule
    Dimension of $G$-Embedding & Size of $G$-Embedding layer & 5-256\\
    Number of Filters & Number of filters used in $G$-Convolution layer& 5-256 \\
    Embedding Dimension  & Dimension of embedding layer in alignment model & 5-256\\
    Hidden Size & Size of LSTM layers used in alignment model & 5-256\\
    Batch Size & - & 8 - 64\\\bottomrule
    \end{tabular}
    \caption{Hyperparameter ranges that were searched}
    \label{tab:hyperparam_range}
\end{table}
For each split, sets of hyperparameters were randomly sampled and models with those hyperparameters were trained and evaluated.
\Cref{tab:hyperparam_values} shows the value of the hyperparameters for the best-achieving model on each split.
\begin{table}
    \centering
        \begin{tabular}{llllll}
    \toprule
    Split & \begin{tabular}{@{}l@{}}Dimension of\\$G$-Embedding\end{tabular} & \begin{tabular}{@{}l@{}}Number of\\ filters\end{tabular} & \begin{tabular}{@{}l@{}}Embedding\\Dimension\end{tabular}&  \begin{tabular}{@{}l@{}}Hidden\\Size\end{tabular} & \begin{tabular}{@{}l@{}}Batch\\Size\end{tabular}\\\midrule
    Simple & 20 & 24 & 36 & 6 & 8 \\
    Simple & 6 & 13 & 67 & 13 & 8\\
    Add Jump & 122 & 7 & 223 & 67& 8 \\
    Around Right  & 100 & 7 & 122 & 36 & 8 \\
    Around Right & 20 & 20 & 9 & 55  & 8 \\
    Around Right & 45 & 182 & 100 & 9 & 8 \\
    Length & 45 & 24 & 11 & 149 & 32\\\bottomrule
    \end{tabular}
    \caption{Values of hyperparameters for the best-performing sum-model on each split tested. Where more than one set of hyperparameters is given for a split, both performed equally well.}
    \label{tab:hyperparam_values}
\end{table}
In \cref{tab:max_hyperparam_values} and \cref{tab:annealed_hyperparam_values} we also include the best-performing hyperparameters for the max and annealed variations of the model.
\begin{table*}[p]
    \centering
        \begin{tabular}{llllll}
    \toprule
    Split & \begin{tabular}{@{}l@{}}Dimension of\\$G$-Embedding\end{tabular} & \begin{tabular}{@{}l@{}}Number of\\ filters\end{tabular} & \begin{tabular}{@{}l@{}}Embedding\\Dimension\end{tabular}&  \begin{tabular}{@{}l@{}}Hidden\\Size\end{tabular} & \begin{tabular}{@{}l@{}}Batch\\Size\end{tabular}\\\midrule
    Simple & 16 & 24 & 182 & 36 & 8 \\
    Add Jump & 30 & 11 & 182 & 55  & 16 \\
    Around Right  & 30 & 223 & 122 & 11 & 8 \\
    Length & 36 & 55 & 30 & 4& 16\\\bottomrule
    \end{tabular}
    \caption{Values of hyperparameters for the best-performing max-model on each split tested.}
    \label{tab:max_hyperparam_values}
\end{table*}
\begin{table*}[p]
    \centering
        \begin{tabular}{llllll}
    \toprule
    Split & \begin{tabular}{@{}l@{}}Dimension of\\$G$-Embedding\end{tabular} & \begin{tabular}{@{}l@{}}Number of\\ filters\end{tabular} & \begin{tabular}{@{}l@{}}Embedding\\Dimension\end{tabular}&  \begin{tabular}{@{}l@{}}Hidden\\Size\end{tabular} &  \begin{tabular}{@{}l@{}}Batch\\Size\end{tabular}\\\midrule
    Simple & 182 & 122 & 223 & 11  & 8\\
    Add Jump & 122 & 7 & 223 & 67  & 8 \\
    Around Right & 55 & 100 & 45 & 30 & 8 \\
    Length & 13 & 9 & 11 & 16 & 16\\\bottomrule
    \end{tabular}
    \caption{Values of hyperparameters for the best-performing annealed max-model on each split tested.}
    \label{tab:annealed_hyperparam_values}
\end{table*}
\section{Low-Data Experiments}
To test our model's ability to learn from small amounts of training data, we trained and evaluated a model with the best-performing hyperparameters on the Simple split using the low-data splits of \SCAN.
These splits have training sets containing 1\%, 2\%, 4\%, 8\%, 16\%, 32\% and 64\% of the total examples in the Simple split.
For comparison, we repeated this for all variants of our model, as well as for the equivariant model used by \newcite{gordon2019permutation}.
The results are shown in \cref{tab:lowdata}.
\begin{table}
    \centering
        \begin{tabular}{lllll}
    \toprule
    \begin{tabular}{@{}l@{}}Training Data\\Percentage\end{tabular} & \begin{tabular}{@{}l@{}}Sum\\Model\end{tabular} & \begin{tabular}{@{}l@{}}Max\\Model \end{tabular}&\begin{tabular}{@{}l@{}}Annealed\\Model\end{tabular} & \citeauthor{gordon2019permutation}\\\midrule
    1 & 42.14 & 0.47 & 33.26 & \textbf{45.75}\\
    2 & 68.09 &	56.28 & 58.36 &\textbf{73.53}\\
    4 & \textbf{89.59} & 9.2	& 81.22 &	89.54\\
    8 & 94.67 & \textbf{99.23}	&	95.61 &	96.60\\
    16 & 98.88 & \textbf{99.48 }& 98.44 & 97.12\\
    32 & \textbf{99.86} & 99.48 & 97.41 &	97.47\\
    64 & 99.67 & \textbf{99.88} & 99.27 & 99.67\\\bottomrule
    \end{tabular}
    \caption{Accuracy obtained by each model in low-data conditions}
    \label{tab:lowdata}
\end{table}
We can see that in the lowest data conditions, our model doesn't generalize as well as \citeposs{gordon2019permutation} model.
For the sets containing 16\% or more or the data, our model performs better.
We theorize that this is due to difficulty learning a high-quality alignment in the lowest-data conditions.
These results also show the volatility of the Max variant of our model -- it is able to reach above 99\% accuracy with less data than any of the other models, but with any less data it struggles to learn much at all.
\section{\SCAN\ Lexicon}
For further context we provide lists of the full \SCAN\ lexicon for input and output in \cref{tab:scan_in_vocab} and \cref{tab:scan_out_vocab} respectively.
In \cref{tab:direction} we list the words in the Direction lexical class in the input and output lexicon.
In \cref{tab:verbs} we do the same for the Verbs lexical class.
All other input words were in single-item lexical classes.
\begin{center}
\begin{minipage}[t]{.33\textwidth}
\vspace{0pt}
\centering
    \begin{tabular}{l}\toprule
    Input Words\\\midrule
    \runin\\
    \walkin\\
    \lookin\\
    \jumpin\\
    \leftin\\
    \rightin\\
    \afterin\\
    \andin\\
    \texttt{turn}\\
    \aroundin\\
    \twicein\\
    \thricein\\
    \texttt{opposite}\\
    \texttt{around}\\\bottomrule
    \end{tabular}
    \captionof{table}{All input vocabulary for the \SCAN\ task\label{tab:scan_in_vocab}}
\end{minipage}\hfil
\begin{minipage}[t]{.33\textwidth}
\vspace{0pt}
\centering
    \begin{tabular}{l}\toprule
    Output Words\\\midrule
    \runout\\
    \walkout\\
    \lookout\\
    \jumpout\\
    \leftout\\
    \rightout\\\bottomrule
    \end{tabular}
    \captionof{table}{All output vocabulary for the \SCAN\ task\label{tab:scan_out_vocab}}
\end{minipage}\hfil
\begin{minipage}[t]{.33\textwidth}
\vspace{0pt}
\centering
    \begin{tabular}{ll}\toprule
        Input & Output \\\midrule
        \rightin & \rightout \\
        \leftin & \leftout \\ \bottomrule
    \end{tabular}
    \captionof{table}{Direction lexical class in input and output vocabulary\label{tab:direction}}
    \vspace{10pt}
    \begin{tabular}{ll}\toprule
        Input & Output \\\midrule
        \runin & \runout \\
        \walkin & \walkout \\ 
        \lookin & \lookout \\
        \jumpin & \jumpout \\\bottomrule
    \end{tabular}
    \captionof{table}{Verb lexical class in input and output vocabulary\label{tab:verbs}}
\end{minipage}
\end{center}

\end{document}